\newcommand\contributionNote[1]{%
  \begingroup
  \renewcommand\thefootnote{}\footnote{\kern-5pt \textcolor{white}{\rule{5pt}{2ex}}#1}%
  \addtocounter{footnote}{-1}%
  \endgroup
}
\def\hat{\widehat}
\def\tilde{\widetilde}
\begin{document}
\begin{center}
    \vspace*{0.5cm}

\LARGE{Mechanism of feature learning in convolutional neural networks} 
 
\vspace*{0.5cm}

\large{Daniel Beaglehole$^{*, 2}$ \\ Adityanarayanan Radhakrishnan$^{*, 3, 4}$ \\ Parthe Pandit$^{1}$ \\ \hspace{.1mm} Mikhail Belkin$^{1, 2}$}

\vspace*{0.5cm}

\normalsize{$^{1}$Hal\i c\i o\u glu Data Science Institute, UC San Diego.} \\
\normalsize{$^{2}$Computer Science and Engineering, UC San Diego.} \\
\normalsize{$^{3}$Massachusetts Institute of Technology.} \\
\normalsize{$^{4}$Broad Institute of MIT and Harvard.} \\
\normalsize{$^{*}$Equal contribution.}
\vspace*{0.5cm}
\end{center}

\setcounter{footnote}{3}
\begin{abstract} 
Understanding the mechanism of how convolutional neural networks learn features from image data is a fundamental problem in machine learning and computer vision.  In this work, we identify such a mechanism.  We posit the Convolutional Neural Feature Ansatz, which states that covariances of filters in any convolutional layer are proportional to the average gradient outer product (AGOP) taken with respect to patches of the input to that layer.  We present extensive empirical evidence for our ansatz, including identifying high correlation between covariances of filters and patch-based AGOPs for convolutional layers in standard neural architectures, such as AlexNet, VGG, and ResNets pre-trained on ImageNet.  We also provide supporting theoretical evidence. We then demonstrate the generality of our result by using the patch-based AGOP to enable deep feature learning in convolutional kernel machines.  We refer to the resulting algorithm as (Deep) ConvRFM and show that our algorithm recovers similar features to deep convolutional networks including the notable emergence of edge detectors.  Moreover, we find that Deep ConvRFM overcomes previously identified limitations of convolutional kernels, such as their inability to adapt to local signals in images and, as a result, leads to sizable performance improvement over fixed convolutional kernels.  
\end{abstract}

\section{Introduction}

Neural networks have achieved impressive empirical results across various tasks in natural language processing~\cite{GPT3}, computer vision~\cite{DALLE}, and biology~\cite{Alphafold}.  Yet, our understanding of the mechanisms driving the successes of these models is still emerging.  One such mechanism of central importance is that of \textit{neural feature learning}, which is the ability of networks to automatically learn relevant input transformations from data~\cite{FeatureLearningEmergenceShi, FeatureLearningEmpiricalEvidence, YangFeatureLearning, radhakrishnan2022feature}.   

An important line of work~\cite{FeatureLearningEmergenceShi, YangFeatureLearning, jacot2022feature, PreetumLimitations, DamianLowRank, WangOneStep, IoannisLowRank, parkinson2023linear} has demonstrated how feature learning in fully connected neural networks provides an advantage over classical, non-feature-learning models such as kernel machines.  Recently, the work~\cite{radhakrishnan2022feature} identified a connection between a mathematical operator, known as average gradient outer product (AGOP)~\cite{trivedi2020expected, EGOPSamory, RecursiveMultiIndex, HardleStokerGradientAveraging},  and feature learning in fully connected networks.  This work subsequently demonstrated that the AGOP could be used to enable similar feature learning in kernel machines operating on tabular data.  In contrast to the case for fully connected networks, there are few prior works~\cite{karp2021local, DeepFL-Theory} analyzing feature learning in convolutional networks, which have been transformative in computer vision~\cite{ResNet, DALLE}.  The work~\cite{karp2021local} demonstrates an advantage of feature learning in convolutional networks by showing that these models are able to threshold noise and identify signal in image data unlike convolutional kernel methods including Convolutional Neural Tangent Kernels~\cite{CNTKArora}.  The work~\cite{DeepFL-Theory} analyzes how deep convolutional networks can correct features in early layers by simultaneous training of all layers.  While these prior works identify advantages of feature learning in convolutional networks, they do not identify a general operator that captures such feature learning.  The connection between AGOP and feature learning in fully connected neural networks~\cite{radhakrishnan2022feature} suggests that a similar connection should exist for feature learning in convolutional networks.  Moreover, such a mechanism could be used to learn analogous features with any machine learning model such as convolutional kernel machines.   

In this work, we establish a connection between convolutional neural feature learning and the AGOP, which we posit as the Convolutional Neural Feature Ansatz (CNFA).  Unlike the fully connected case from~\cite{radhakrishnan2022feature} where feature learning is characterized by AGOP with respect to network inputs, we demonstrate that convolutional feature learning is characterized by AGOP with respect to patches of network inputs.  We present empirical evidence for the CNFA by demonstrating high average Pearson correlation (in most cases $> .9$) between AGOP on patches and the covariance of filters across all layers of pre-trained convolutional networks on ImageNet~\cite{ImageNet} and across all layers of SimpleNet~\cite{SimpleNet} trained on several standard image classification datasets.  We additionally prove that the CNFA holds for one step of gradient descent for deep convolutional networks.  To demonstrate the generality of our identified convolutional feature learning mechanism, we leverage the AGOP on patches to enable feature learning in convolutional kernel machines.  We refer to the resulting algorithm as ConvRFM.  We demonstrate that ConvRFM captures features similar to those learned by the first layer of convolutional networks.  In particular, on various image classification benchmark datasets such as SVHN~\cite{SVHN} and CIFAR10~\cite{CIFAR10}, we observe that ConvRFM recovers features corresponding to edge detectors.  We further enable deep feature learning with convolutional kernels by developing a layerwise training scheme with ConvRFM, which we refer to as Deep ConvRFM.  We demonstrate that Deep ConvRFM learns features similar to those learned by deep convolutional neural networks. Furthermore, we show that Deep ConvRFM overcomes limitations of convolutional kernels identified in~\cite{karp2021local} and exhibits {\it local feature adaptivity}. Lastly, we demonstrate that Deep ConvRFM provides improvement over CNTK and ConvRFM on several standard image classification datasets, indicating a benefit to deep feature learning.  Our results advance understanding of how convolutional networks automatically learn features from data and provide a path toward integrating convolutional feature learning into general machine learning models.

\section{Convolutional Neural Feature Ansatz (CNFA)}

Let $f: \mathbb{R}^{c \times P \times Q} \to \mathbb{R}$ denote a convolutional neural network (CNN) operating on $P \times Q$ resolution images with $c$ color channels.  The $\ell^{th}$ convolutional layer of a CNN involves applying a function $h_{\ell}: \mathbb{R}^{c_{\ell-1} \times P_{\ell-1} \times Q_{\ell-1}} \to \mathbb{R}^{c_{\ell} \times P_{\ell} \times Q_{\ell}}$ defined recursively as $h_{\ell}(x) = \phi(\tilde{W}_{\ell} * h_{\ell-1}(x))$ with $h_{1} = x$, $\tilde{W}_{\ell} \in \mathbb{R}^{c_{\ell} \times c_{\ell-1} \times q \times q}$ denoting $c_{\ell}$ filters of size $c_{\ell-1} \times q \times q$, $*$ denoting the convolution operation, and $\phi$ denoting an elementwise activation function.  To understand how features emerge in convolutional networks, we abstract a convolutional network to a function of the form 
\begin{align}
\label{eq: Convolutional Abstraction}
f(x) = g(W_1 x[1, 1], \ldots, W_1 x[i,j],\ldots,W_1 x[P, Q]),\quad i\in[P], j\in[Q]~;     
\end{align}
where $W_1 \in \mathbb{R}^{c_{1} \times cq^2}$ is a matrix of $c_1$ stacked filters of size $cq^2$ and $x[i, j] \in \mathbb{R}^{cq^2}$ denotes the patch of $x$ centered at coordinate $(i, j)$.  This abstraction is helpful since it allows us to consider feature learning in convolutional networks with arbitrary architecture (e.g., pooling layers, batch normalization, etc.) after any given convolutional layer. Up to rotation and reflection by the left singular vectors, the feature extraction properties of $W_1$ are determined by the singular values and right singular vectors of $W_1$.  These singular values and vectors can be recovered from the matrix $W_1^T W_1$, which is the empirical (uncentered) covariance of filters in the first layer.  This argument extends to analyze features selected at layer $\ell$ of a CNN by considering a function of the form $f(x) = g_{\ell}(W_\ell h_{\ell-1}(x)[1, 1], \ldots,  W_\ell h_{\ell-1}(x)[P_{\ell-1}, Q_{\ell-1}])$.  We refer to the matrix $W_{\ell}^T W_{\ell}$ as a \textit{Convolutional Neural Feature Matrix} (CNFM) and note that this matrix is proportional to the (uncentered) empirical covariance matrix of filters in layer $\ell$.  

We use the form of convolutional networks presented in Eq.~\eqref{eq: Convolutional Abstraction} to state our Convolutional Neural Feature Ansatz (CNFA).  Let $G_{\ell}(x) := g_{\ell}(W_\ell h_{\ell-1}(x)[1, 1], \ldots,  W_\ell h_{\ell-1}(x)[P_{\ell-1}, Q_{\ell-1}])$.  Then, after training $f$ for at least one epoch of (stochastic) gradient descent on standard loss functions:
\begin{align}
\label{eq: CNFA}
    W_{\ell}\tran W_{\ell} \propto \sum_{p=1}^{n} \sum_{(i, j) \in S} \nabla_{h_{\ell-1}(x)[i,j]} G_{\ell}(x) \left(\nabla_{h_{\ell-1}(x)[i,j]} G_{\ell}(x)\right) \tran; 
\end{align}
where $S = \{(i,j)\}_{i \in [P_{\ell-1}], j \in [Q_{\ell-1}]}$ denotes the set of indices of patches utilized in the convolution operation in layer $\ell$.  The CNFA (Eq.~\ref{eq: CNFA}) mathematically implies that the convolutional neural feature matrices are proportional to the average gradient outer product (AGOP) with respect to the patches of the input to layer $\ell$.  The CNFA implies that the structure of covariance matrices of filters in convolutional networks, an object studied in prior work~\cite{TrockmanCovariance}, corresponds to AGOP over patches.  Intuitively, the CNFA implies that convolutional features are constructed by identifying and amplifying those pixels in any patch that most change the output of the network.  We now present extensive empirical evidence corroborating our ansatz.  We subsequently present supporting theoretical evidence. 

\begin{figure}[t]
    \centering
    \includegraphics[width=1\textwidth]{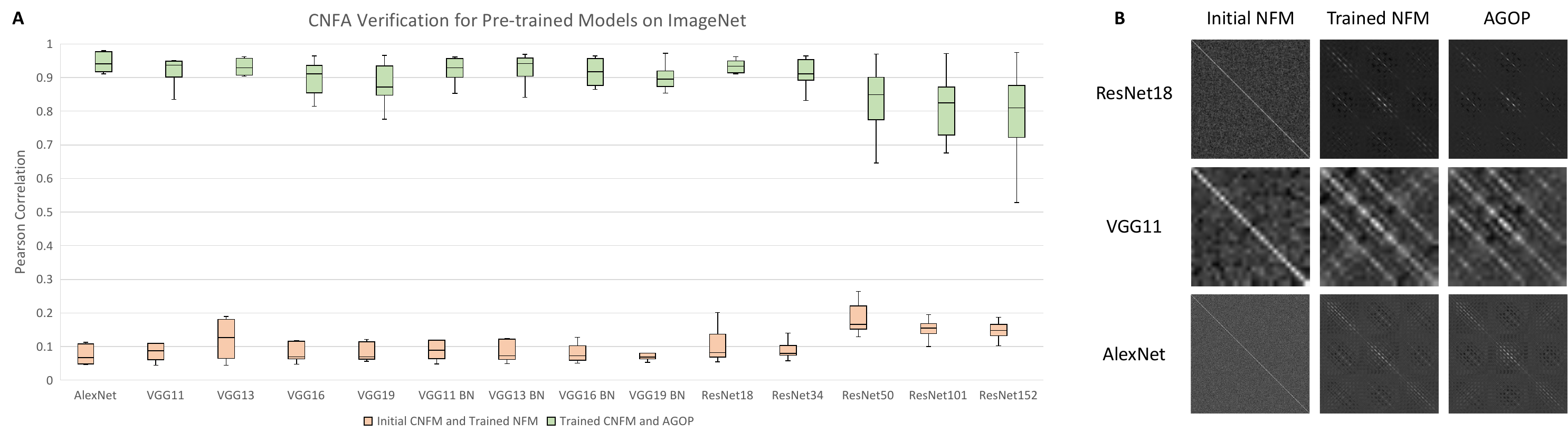}    
    \caption{\textbf{A.} Correlation between initial CNFM and trained CNFM (red) and trained CNFM with AGOP (green) for convolutional layers in VGG, AlexNet, and ResNet on ImageNet ($224 \times 224$ resolution color images). \textbf{B.} Initial CNFM, trained CNFM, and AGOP matrices for the first convolutional layer of ResNet18, VGG11, and AlexNet on ImageNet.}
    \label{fig: SOTA Conv NFA}
\end{figure}

\subsection{Empirical evidence for CNFA}

We now provide empirical evidence for the ansatz by computing the correlation between CNFMs and the AGOP for each convolutional layer in various CNNs.  We provide three lines of evidence by computing correlations for the following models: (1) AlexNet~\cite{AlexNet}, all VGGs~\cite{VGG}, and all ResNet~\cite{ResNet} models pre-trained on ImageNet~\cite{ImageNet} ; (2) SimpleNet models~\cite{SimpleNet} trained on SVHN~\cite{SVHN}, GTSRB~\cite{GTSRB}, CIFAR10~\cite{CIFAR10}, CIFAR100, and ImageNet32~\cite{ImageNet32}; and (3) shallow CNNs across $10$ standard computer vision datasets from PyTorch upon varying pooling and patch size of convolution operations.  The first set of experiments provides evidence for the ansatz in large-scale state-of-the-art models on ImageNet. The second set provides evidence for the ansatz across standard computer vision datasets.  The last set provides evidence for the ansatz holding across architecture choices.   

\paragraph{CNFA verification for pre-trained state-of-the-art models on ImageNet.} We begin by providing evidence for the ansatz on pre-trained state-of-the-art models on ImageNet.  In Fig.~\ref{fig: SOTA Conv NFA}, we present these correlations for AlexNet, all VGG models and all ResNet models pre-trained on ImageNet, which are available for download from the PyTorch library~\cite{PyTorch}.\footnote{We evaluate all correlations between AGOP and CNFMs for all convolutional layers of AlexNet and all VGGs.  To simplify computation on ResNets, we evaluate correlations between AGOP and CNFMs for the first layer in each BasicBlock and each Bottleneck, as defined in PyTorch.  We note that for ResNet152, this computation involves computing correlation between matrices in $50$ Bottleneck blocks.}  As a control, we verify that weights at the end of training are far from initialization (see the red bars in Fig.~\ref{fig: SOTA Conv NFA}A).  Note that despite the complexity involved in training these models (e.g., batch normalization, skip connections, custom optimization procedures, data augmentation) the Pearson correlation between the AGOP and CNFMs are remarkably high ($> .9$ for each layer of AlexNet and VGG13).  In Fig.~\ref{fig: SOTA Conv NFA}B, we additionally visualize the AGOP and CNFM for the first convolutional layer in AlexNet, VGG11, and ResNet18 to demonstrate the qualitative similarity between these matrices.  In addition, in Appendix Fig.~\ref{appendix fig: CNFA Init vs Final}, we verify that these correlations are lower at initialization than at the end of training indicating that the ansatz is, in fact, a consequence of training.

\paragraph{CNFA verification for SimpleNet on CIFAR10, CIFAR100, ImageNet32, SVHN, GTSRB.}  To verify the ansatz on other datasets, we also trained the SimpleNet model on five datasets including CIFAR10/100, ImageNet32, SVHN, and GTSRB. We note SimpleNet had achieved state-of-the-art results on several of these tasks at the time of its release (e.g., $>95\%$ test accuracy on CIFAR10).  We train SimpleNet models using the same optimization procedure provided from~\cite{SimpleNet} (i.e., Adadelta~\cite{AdaDelta} with weight decay and manual learning rate scheduling).  We use a small initialization scheme of normally distributed weights with a standard deviation of $10^{-4}$ for convolutional layers. We note that we were able to recover high test accuracies across all datasets consistent with the results from~\cite{SimpleNet} (see test accuracies for these trained SimpleNet models in Appendix Fig.~\ref{appendix fig: Simplenet CNFA}).  As shown in Appendix Fig.~\ref{appendix fig: Simplenet CNFA}, we observe consistently high correlation between AGOPs and CNFMs across layers of SimpleNet.

\paragraph{CNFA is robust to hyperparameter choices.} We lastly study the effect of patch size and architecture choices on the CNFA for networks trained using the Adam optimizer~\cite{Adam}. We generally observe that larger patch sizes slightly reduce the correlation between AGOP and CNFMs, and that max pooling layers (in contrast to no pooling or average pooling) lead to higher correlation (Appendix Fig.~\ref{appendix fig: Conv NFA}). Interestingly, these results indicate that the choices used in state-of-the-art CNNs (max pooling layers and patch size of $3$) are consistent with those that lead to highest correlation between AGOP and CNFMs.

\begin{figure}[!t]
    \centering
    \includegraphics[width=\textwidth]{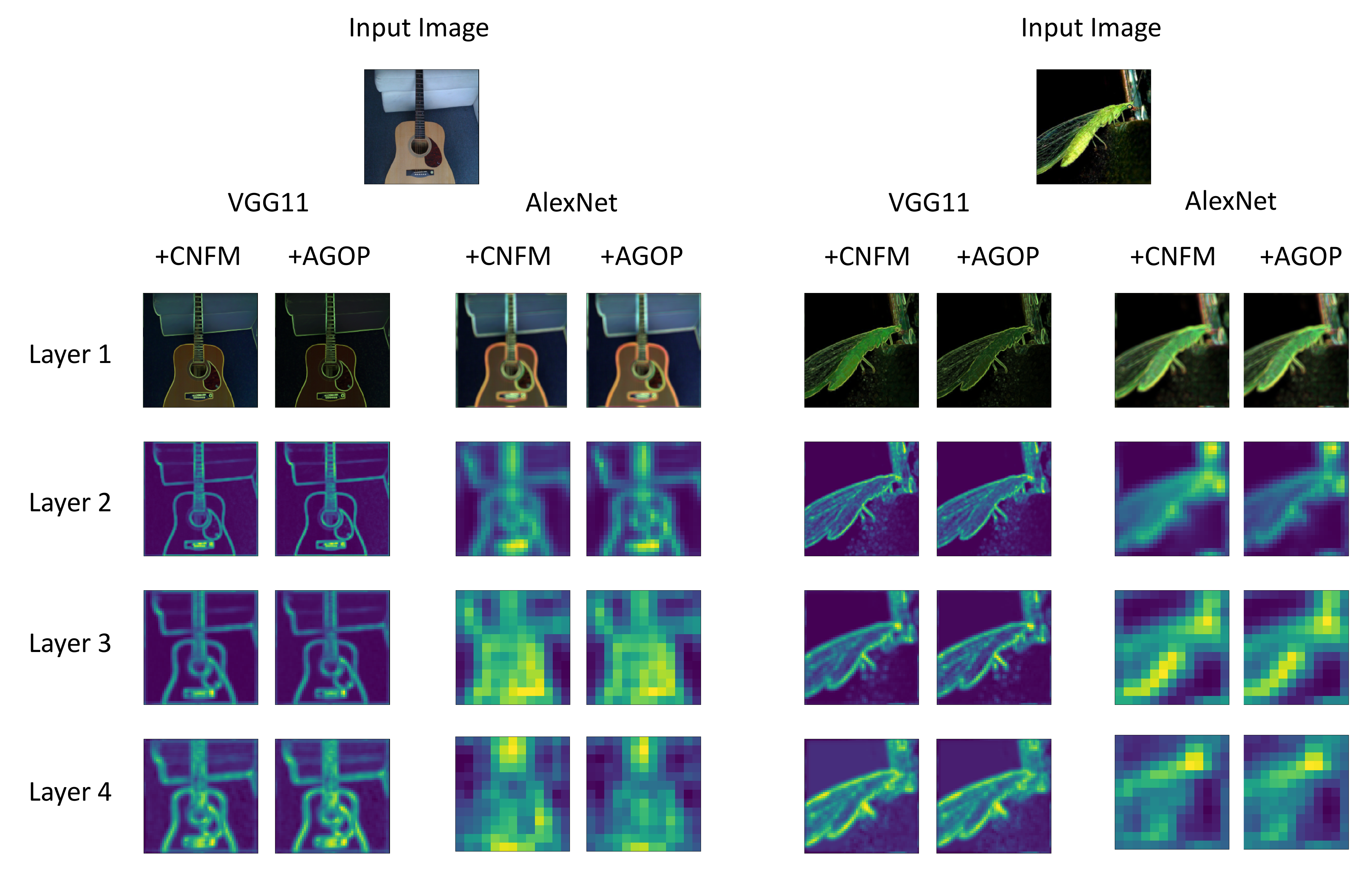}
    \caption{Comparison of features extracted by CNFMs and AGOPs across layers of VGG11 and AlexNet for two input images.  These visualizations provide further supporting evidence that the CNFMs and AGOPs of early layers are performing an operation akin to edge detection.}
    \label{fig: Fig 2}
\end{figure}

\subsection{Visualizing features captured by CNFM and AGOP}

We now visualize how the CNFM operates on patches of images to select features and demonstrate that AGOP over patches captures similar features.  Both the CNFM and AGOP yield an operator on patches of images.  Thus, to visualize how these matrices select features, we expand input images into individual patches, then apply either the CNFM or the AGOP to each patch.  We then reduce the expanded image back to its original size by taking the norm over the spatial dimensions of each expanded patch.  Formally, the value for each coordinate $(i,j) \in P_{\ell-1} \times Q_{\ell-1}$ is replaced with $\|M_{\ell}^{\frac{1}{2}} h_{\ell-1}(X)[i,j]\|$ where $M_{\ell} := W_{\ell}^T W_{\ell}$.   Our visualization reflects the magnitude of the patch in the image of the patch transformation. For example, if $M_{\ell}$ is an edge detector, then $\|M_{\ell}^{\frac{1}{2}} h_{\ell-1}(X)[i,j]\|$ will be large, if and only if the patch centered at coordinate $(i,j)$ contains an edge.


This visualization technique emerges naturally from the convolution operation in CNNs, where a post-activation hidden unit is generated by applying a filter to each patch independently of the others. Further, this visualization characterizes how a trained CNN extracts features across patches of any image. This is in contrast to visualization techniques based on saliency maps~\cite{SaliencyMap, CAM, DeepLIFT, Grad-CAM}, which consider gradients with respect to an entire input image and for a single sample.  

In addition to the high correlation between AGOP and CNFMs in the previous section, in Fig.~\ref{fig: Fig 2}, we observe that the AGOP and CNFMs transform input images similarly at any given layer of the CNN.  For $224 \times 224$ images from ImageNet, CNFMs and AGOPs extracted from a pre-trained VGG11 model both emphasize objects and their edges in the image.  We note these visualizations corroborate hypotheses from prior work that the first layer weights of deep CNNs learn an operator corresponding to edge detection~\cite{ZeilerVisualizing}.  Moreover, our results imply that the mathematical origin of edge detectors in convolutional neural networks is the average gradient outer product.  In the following section, we will corroborate this claim by demonstrating that such edge detectors can be recovered without the use of any neural network through estimating the average gradient outer product of convolutional kernel machines.

\subsection{Supporting Theoretical Evidence for CNFA}

The following theorem (proof in Appendix~\ref{appendix: Theoretical Evidence}) proves the ansatz for general convolutional networks after 1 step of full-batch gradient descent.
\begin{theorem}
\label{thm:one_step}
Let $f$ denote a function that operates on $m$ patches of size $q$, i.e., let $f(v_1, v_2, \ldots, v_m): \mathbb{R}^{q} \times \ldots \times \mathbb{R}^{q} \to \mathbb{R}$ with $f(v_1, v_2, \ldots, v_m) = g(Wv_1, Wv_2, \ldots, Wv_m)$ where $W \in \mathbb{R}^{k \times q}$ and $g(z_1, \ldots, z_m): \mathbb{R}^{k} \times \ldots \times \mathbb{R}^{k} \to \mathbb{R}$.  Assume $g(\mathbf{0}) = 0$ and $\frac{ \partial g(\mathbf{0})}{\partial z_{\ell}} = \frac{\partial g(\mathbf{0})}{\partial z_{\ell'}} \neq 0$ for all $\ell, \ell' \in [m]$.  If $W$ is trained for one step of gradient descent with mean squared loss on data $\{((v_1^{(p)}, \ldots v_m^{(p)}), y_p)\}_{p=1}^{n}$ from initialization $W^{(0)} = \mathbf{0}$, then for the point $(u_1, \ldots, u_m)$: 
\begin{align}
\label{eq: general model one step}
{W^{(1)}}^T W^{(1)} \propto \sum_{r=1}^{m} \frac{\partial f^{(1)}(u_1, \ldots, u_m)}{\partial v_r} \frac{\partial f^{(1)}(u_1, \ldots, u_m)}{\partial v_r}^T ~;
\end{align}
where $f^{(1)}(v_1, v_2, \ldots v_m):= g(W^{(1)}v_1, W^{(1)}v_2, \ldots, W^{(1)}v_m)$.
\end{theorem}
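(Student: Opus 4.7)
The plan is to exploit the very special structure imposed by the two hypotheses $W^{(0)}=\mathbf{0}$ and $\partial g(\mathbf{0})/\partial z_\ell = c$ (a common nonzero vector in $\mathbb{R}^k$) for all $\ell$. Together these will force $W^{(1)}$ to be a rank-one matrix of a specific form, and the conclusion of the theorem will then reduce to the observation that a rank-one matrix makes both sides of Eq.~\eqref{eq: general model one step} collapse to the same outer product.

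First, I would compute $W^{(1)}$ in closed form. Since $W^{(0)}=\mathbf{0}$ and $g(\mathbf{0})=0$, the prediction $f(v^{(p)})$ vanishes on every training point, so the MSE residual on sample $p$ is simply $-y_p$. The chain rule gives $\nabla_W f(v_1,\ldots,v_m) = \sum_{\ell=1}^m (\partial g/\partial z_\ell)\, v_\ell^\top$, and evaluating at $W^{(0)}=\mathbf{0}$ yields $\nabla_W f|_{W=\mathbf{0}} = c\,\bigl(\sum_\ell v_\ell\bigr)^\top$ because all partial derivatives equal the same vector $c$. Summing over samples, the loss gradient at initialization is $-c\,\mathbf{v}^\top$ where $\mathbf{v} := \sum_p y_p \sum_\ell v_\ell^{(p)} \in \mathbb{R}^q$. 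Hence after one gradient step with learning rate $\eta$,
\begin{equation*}
    W^{(1)} = \eta\, c\, \mathbf{v}^\top,
\end{equation*}
a rank-one matrix. Computing ${W^{(1)}}^\top W^{(1)} = \eta^2 \|c\|^2 \, \mathbf{v}\mathbf{v}^\top$ gives the left-hand side of Eq.~\eqref{eq: general model one step}.

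Next, I would show the right-hand side is also a (nonnegative) scalar multiple of $\mathbf{v}\mathbf{v}^\top$. Since $W^{(1)} = \eta c\mathbf{v}^\top$, each preactivation $W^{(1)} u_r = \eta(\mathbf{v}^\top u_r)\,c$ is a scalar multiple of $c$. Setting $a_r := \partial g(W^{(1)}u_1,\ldots,W^{(1)}u_m)/\partial z_r \in \mathbb{R}^k$, the chain rule gives
\begin{equation*}
    \frac{\partial f^{(1)}}{\partial v_r}\bigg|_{(u_1,\ldots,u_m)} = {W^{(1)}}^\top a_r = \eta (c^\top a_r)\,\mathbf{v},
\end{equation*}
which is again a scalar multiple of $\mathbf{v}$. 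Therefore $\sum_{r=1}^m (\partial f^{(1)}/\partial v_r)(\partial f^{(1)}/\partial v_r)^\top = \eta^2\bigl(\sum_{r} (c^\top a_r)^2\bigr)\mathbf{v}\mathbf{v}^\top$, and comparing with the left-hand side gives the claimed proportionality (with a constant that may depend on the evaluation point $(u_1,\ldots,u_m)$, as permitted by the statement).

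There is no serious technical obstacle: both sides are produced by direct calculation. The only subtlety worth isolating is \emph{why} the hypotheses are needed. The assumption $g(\mathbf{0})=0$ is used to make the residuals depend only on the labels $y_p$ (not on a data-dependent correction); the shared-gradient assumption $\partial g(\mathbf{0})/\partial z_\ell = c$ is what factorizes the sum over patches in $\nabla_W f|_{W=\mathbf{0}}$ and forces the rank-one structure of $W^{(1)}$. This rank-one structure is precisely what allows the per-point patch-AGOP to match the feature matrix ${W^{(1)}}^\top W^{(1)}$ up to a scalar, which is the content of the ansatz specialized to one gradient step.
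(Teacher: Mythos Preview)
Your proof is correct and follows essentially the same route as the paper's: both compute $W^{(1)}$ explicitly from the zero initialization and shared-gradient hypotheses, obtain the rank-one form $W^{(1)}=\eta\,c\,\mathbf{v}^\top$ (the paper writes this as $\eta\sum_{p}\sum_{\ell} G\,y_p\,{v_\ell^{(p)}}^\top$ with $G=c$, which is the same thing), and then verify that both sides of Eq.~\eqref{eq: general model one step} are scalar multiples of $\mathbf{v}\mathbf{v}^\top$. Your framing around the rank-one structure is a slight expository improvement over the paper, which carries the double sums through without naming $\mathbf{v}$, but the underlying computation is identical.
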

We note the assumptions of Theorem~\ref{thm:one_step} hold for several types of convolutional networks.  As a simple example, the assumptions hold for convolutional networks with activation function $\phi$ satisfying $\phi(0) = 0$ and $\phi'(0) \neq 0$ (e.g., tanh activation) with remaining layers initialized as constant matrices.  Furthermore, we note that while the above theorem is stated for the first layer of a convolutional network, the same proof strategy applies for deeper layers by considering the subnetwork $G_\ell(x)$. 
\section{CNFA as a general mechanism for convolutional feature learning}

We now show that the CNFA allows us to introduce a feature learning mechanism in any machine learning model on patches to capture features akin to those of convolutional networks.  Given recent work connecting neural networks to kernel machines~\cite{NTKJacot}, we focus on convolutional kernels given by the Convolutional Neural Tangent Kernel (CNTK)~\cite{CNTKArora} as our candidate model class.  Intuitively, these models can be thought of as combining kernels evaluated across pairs of patches in images.  While such models have achieved impressive performance~\cite{BiettiDeepConvRep, BiettiKernel, EnchancedCNTK, MyrtleKernels, SimpleFastFlexibleMatrixCompletion, CNTKMillonsExamples}, these models do not automatically learn features from data unlike CNNs.  Thus, as demonstrated in prior work~\cite{karp2021local, PreetumLimitations}, there are tasks where CNTKs are significantly outperformed by corresponding CNNs.

A major consequence of the CNFA is that we can now enable feature learning in CNTKs by leveraging the AGOP over patches.  In particular, we can first solve kernel regression with the CNTK and then use the AGOP of the trained predictor over patches of images to learn features.  We call our method the \textit{Convolutional Recursive Feature Machine (ConvRFM)}, as it is the convolutional variant of the original RFM \cite{radhakrishnan2022feature}.  We will demonstrate that ConvRFM accurately captures first layer feature learning in CNNs and can recover edge detectors as features when trained on standard image classification datasets.  To account for deep convolutional feature learning, we extend ConvRFM to Deep ConvRFMs by sequentially learning features in a manner similar to layerwise training in CNNs.  We show that Deep ConvRFM: (1) improves performance of CNTKs on local signal adaptivity tasks considered in~\cite{karp2021local} ; and (2) improves performance of CNTKs on several image classification tasks.

\begin{algorithm}[!t]
\caption{Convolutional Recursive Feature Machine (ConvRFM)}\label{alg:ConvRFM}
\begin{algorithmic}
\Require $X, y, K_M, T, q$ \Comment{Train data: $(X, y)$, kernel: $K_M$, iters.: $T$, and patch size: $q$}
\Ensure $\alpha, M$ \Comment{Solution to kernel regression: $\alpha$, and feature matrix: $M$}
\State $M = I_{cq^2}$ \Comment{Initialize $M$ to be the identity matrix of size $cq^2 \times cq^2$}
\For{$t \in T$}
    \State $K_{train} = K_M(X, X)$\Comment{$K_M(X, X)_{i,j} := K_M(x_i, x_j)$} 
    \State $\alpha = yK_{train}^{-1}$
    \State $M = \frac{1}{n} \sum_{x \in X} \sum_{(u,v) \in \mathcal{S}} (\nabla_{x[u,v]} f(x)) (\nabla_{x[u,v]} f(x))^T$\Comment{$f(x) = \alpha K_M(X, x)$} 
\EndFor
\end{algorithmic}
\end{algorithm}

\subsection{Convolutional Recursive Feature Machine (ConvRFM)}

We present the algorithm for ConvRFM in Algorithm~\ref{alg:ConvRFM}. The ConvRFM algorithm recursively learns a feature extractor on patches of a given image by implementing the AGOP across patches of training data. Namely, the ConvRFM first builds a predictor with a fixed convolutional kernel.  Then, we compute the AGOP of the trained predictor with respect to image patches, which we denote as the \textit{feature matrix}, $M$.  Lastly, we transform image patches with $M$ and then repeat the previous steps.  We provide a concrete example of this algorithm for the convolutional neural network Gausssian process (CNNGP)~\cite{CosineKernel, JaschaOrderedChaoticPhase} of a one hidden layer convolutional network with fully connected last layer operating on black and white images below.

The CNNGP of a one hidden layer convolutional network with fully connected last layer, activation $\phi$, and filter size $q$ is given by 
\begin{align*}
    K(x, z) = \frac{1}{PQ} \sum_{i=1}^{P}\sum_{j=1}^{Q} \check{\phi}(x[i, j]^T z[i, j], \|x[i, j]\|, \|z[i, j]\|)~;
\end{align*}
where $x, z \in \mathbb{R}^{P \times Q}$, $x[i,j] \in \mathbb{R}^{q^2}$ denotes the vectorized $q \times q$ patch of $x$ centered at coordinate $(i,j)$, and $\check{\phi}(a^Tb, \|a\|, \|b\|)$ denotes the dual activation~\cite{DualActivation} of $\phi$.  For the case of ReLU activation, this dual activation has a well known form~\cite{CosineKernel} and is given by
\begin{align*}
    \check{\phi}(a^Tb, \|a\|, \|b\|) = \frac{1}{\pi} \left( a^T b \left(\pi - \arccos \left( \frac{a^T b}{\|a\|\|b\|} \right) \right) + \sqrt{\|a\|^2 \|b\|^2 - a^T b} \right)~.
\end{align*}
 In ConvRFM, we modify the inner product in the kernel above to be a Mahalanobis inner product, constructing kernels of the form
\begin{align*}
    K_M(x, z) := \frac{1}{PQ}  \sum_{i=1}^{P}\sum_{j=1}^{Q} \check{\phi}(x[i, j]^T M z[i, j], x[i, j]^T M x[i, j], z[i, j]^T M z[i,j])~;
\end{align*}
where $M$ is a learned positive semi-definite matrix. In particular, $M$ is updated as the AGOP of the estimator constructed by solving kernel regression with $K_M$.  In our experiments, we analyze performance when replacing $\check{\phi}$ with the Mahanolobis Laplace kernel used in~\cite{radhakrishnan2022feature} and with the CNTK of a deep convolutional ReLU network with fully connected last layer.  We will make clear our choice of $\check{\phi}$ by denoting our method as CNTK-ConvRFM or Laplace-ConvRFM.

\begin{figure}[t]
    \centering
    \includegraphics[width=1\textwidth]{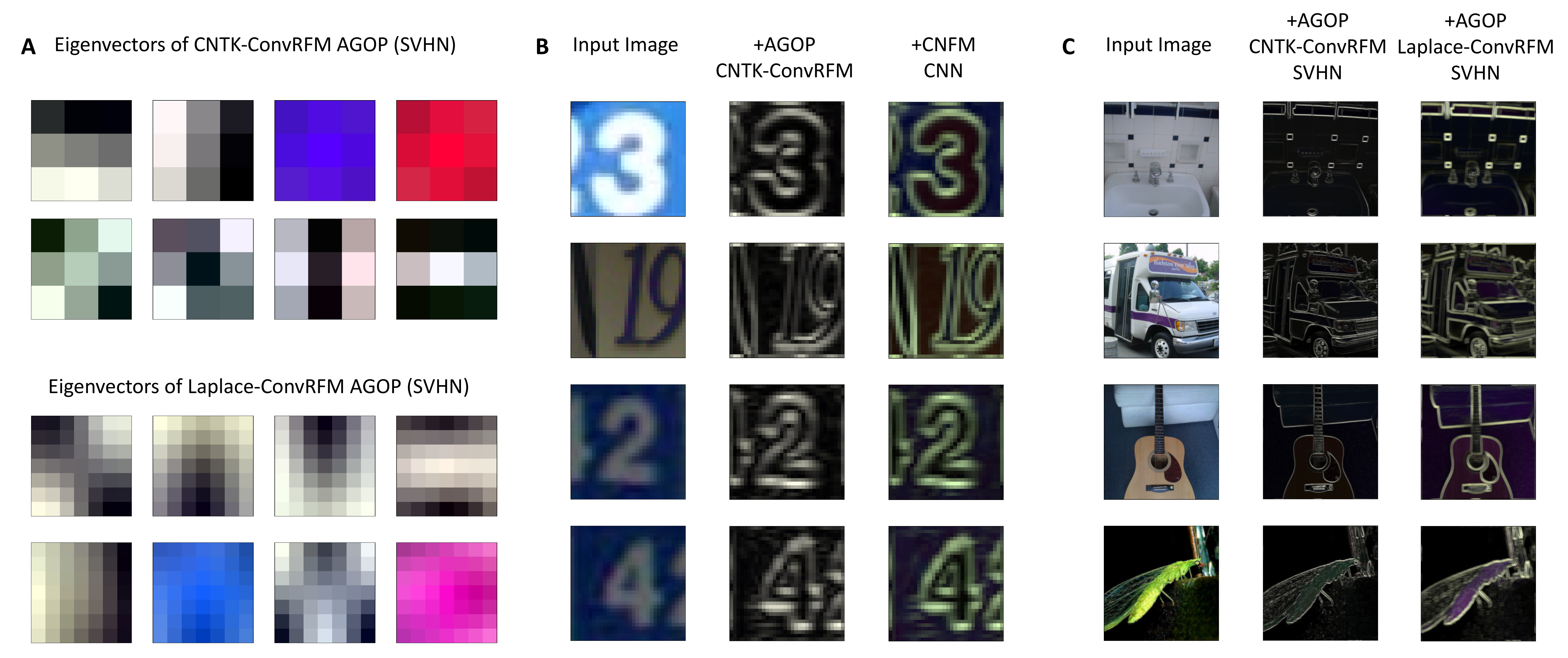}
    \caption{Features extractors learned by ConvRFM using CNTK (CNTK-ConvRFM) and Laplace kernel (Laplace-ConvRFM), which appear to operate as universal edge detectors.  \textbf{A.} Top 8 eigenvectors of CNTK-ConvRFM and Laplace-ConvRFM trained on SVHN.  We use $3 \times 3$ patches for CNTK-ConvRFM and $7 \times 7$ patches for Laplace-ConvRFM. \textbf{B.} Comparison of patch operators learned by CNTK-ConvRFM (given by the AGOP taken with respect to patches) and CNNs (given by the CNFM). \textbf{C.} Applying patch-based AGOP operators from ConvRFMs trained on SVHN to images from ImageNet.  }
    \label{fig: Conv-RFM feature learning}
\end{figure}

\paragraph{ConvRFM captures first layer features of convolutional neural networks.}

We now demonstrate that ConvRFM recovers features similar to those learned by first layers of CNNs.  In Fig.~\ref{fig: Conv-RFM feature learning}A, we visualize the top eigenvectors of the feature matrix of CNTK-ConvRFM (filter size $3 \times 3$) and Laplace-ConvRFM (filter size $7 \times 7$) trained on SVHN.  Training details for all methods are presented in Appendix~\ref{appendix: experimental details}.  We observe that these top eigenvectors resemble edge detectors~\cite{DigitalImageProcessingGonzales}.  In Fig.~\ref{fig: Conv-RFM feature learning}B, we visualize how the feature matrix of the CNTK-ConvRFM and the CNFM of the corresponding finite width CNN trained on SVHN transform SVHN images.  Even though both operators arise from vastly different training procedures (solving kernel regression vs. training a CNN), we observe that both operators appear to extract similar features (corresponding to edges of digits) from SVHN images.  We provide additional evidence for similarity between ConvRFM and CNN features in Appendix Fig.~\ref{appendix fig: Conv RFM vs. CNN Features}.  To demonstrate further evidence of the universality of edge detector features arising from AGOP of CNTK-ConvRFM and Laplace-ConvRFM, we analyze how these AGOPs transform arbitrary images.  In particular, in Fig.~\ref{fig: Conv-RFM feature learning}C, we apply these operators extracted from models trained on SVHN to images on ImageNet.  We again observe that these operators remarkably extract edges from corresponding ImageNet images, which are of vastly different resolution ($224 \times 224$ instead of $32 \times 32$) and contain vastly different objects.  Such experiments provide conclusive evidence that AGOP with respect to patches of convolutional kernels recovers features akin to edge detectors.  We present further experiments demonstrating emergence of edge detectors from convolutional kernels trained on CIFAR10 and GTSRB in Appendix Figs.~\ref{appendix fig: ConvRFM Edge Detection} and \ref{appendix fig: CIFAR10 GTSRB ConvRFM Features}. In particular, the eigenvectors of the AGOP often resemble Gabor filters with different orientations. In Figure~\ref{appendix fig: ConvRFM Edge Detection}, we see that horizontally, vertically, and diagonally aligned eigenvectors identify edges of the same alignment.

\begin{algorithm}[!b]
\caption{Deep Convolutional Recursive Feature Machine (Deep ConvRFM)}\label{alg:DeepConvRFM}
\begin{algorithmic}
\Require $X, y, \{K_\ell\}_\ell, T, L, q, k$ 
\Comment{kernels: $\{K_\ell\}_\ell$, depth: $L$, channels: $k$}
\Ensure $\alpha_L, \{M_\ell\}_{\ell=1}^L$ 
\State $X_1 = X$ \Comment{Initialize embedding}
\For{$\ell \in L$}
    \State $\alpha_\ell, M_\ell = \text{ConvRFM}(X_\ell, y, K_\ell, T, q)$
    \State Sample $k$ filters $W_{\ell,k'} \sim \mathcal{N}(0,M_\ell)$
    \Comment{$W_\ell \in \Real^{k \times q \times q}$}
    \State $X_{\ell + 1} = \phi(W_\ell * X_\ell)$ \Comment{$\phi$: element-wise non-linearity}
\EndFor
\end{algorithmic}
\end{algorithm}

\subsection{Deep feature learning with Deep ConvRFM}
ConvRFM is capable of only extracting features by linearly transforming patches of input images, which is analogous to extracting such features using the first layer of a CNN.  In contrast, the CNFA implies that deep convolutional networks are capable of learning features in intermediate layers.  To enable deep feature learning, we introduce Deep ConvRFM (see Algorithm~\ref{alg:DeepConvRFM}) by sequentially learning features with AGOP in a manner similar to layerwise training in CNNs.  In particular, Deep ConvRFM iterates the following steps: 
\begin{enumerate}
    \item Construct a predictor, $\hat{f}$, by training a convolutional kernel machine with kernel $K_M$.     
    \item Update $M$ to be the AGOP with respect to patches of the trained predictor. 
    \item Transform the data, $x$, with random features given by $\phi(Wx)$ where $W$ denotes a set of convolutional filters with weights sampled according to $\mathcal{N}(0, M)$ and $\phi$ is a nonlinearity.    
\end{enumerate}

Note that while we utilize random features and sample convolutional filters in Deep ConvRFM, we never utilize backpropgation to learn features or train models.  Features are learned via the AGOP and models are trained by solving kernel regression, which is a convex optimization problem.  For the base kernel for Deep ConvRFM, we utilize the deep CNTK~\cite{CNTKArora} as implemented in the Neural Tangents library~\cite{NeuralTangentsGoogle}.\footnote{In order to take gradient with respect to patches using Neural Tangents, we used a workaround that involved expanding images into their patch representations.  This workaround unfortunately leads to heavy memory utilization, which limited our analysis of Deep ConvRFMs.}

\begin{figure}[!t]
    \centering
    \includegraphics[width=1\textwidth]{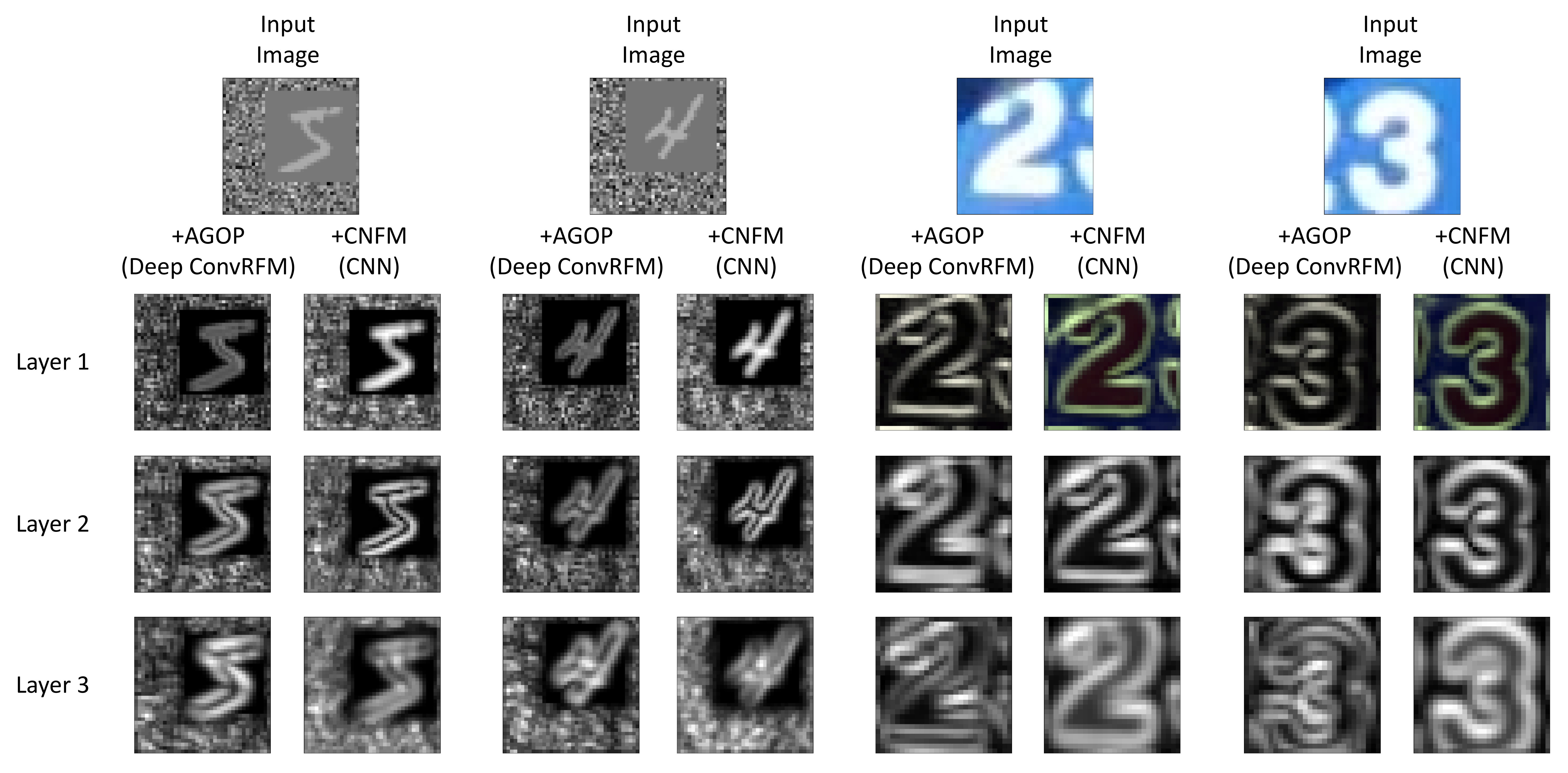}
    \caption{Visualizations of features for each layer of Deep ConvRFM and the corresponding CNN on SVHN and the noisy digits task from~\cite{karp2021local}.}
    \label{fig: Deep FL}
\end{figure}

\paragraph{Deep ConvRFM learns similar features to deep CNNs.}
We now present evidence that Deep ConvRFMs learn similar features to those learned by deep CNNs.  We analyze features learned by deep ConvRFM and the corresponding CNN on the local signal adaptivity synthetic tasks from~\cite{karp2021local} and SVHN.  For the synthetic task from~\cite{karp2021local}, we consider classification of MNIST digits embedded in a larger image of i.i.d. Gaussian noise.  Dataset and training details are presented in Appendix~\ref{appendix: experimental details}.  In Fig.~\ref{fig: Deep FL}, we observe that AGOPs at each layer of Deep ConvRFM and and CNFMs at each layer of the corresponding CNN transform examples from both datasets similarly.  

\paragraph{Deep ConvRFM overcomes limitations of convolutional kernels.} In the work~\cite{karp2021local} the authors posited local signal adaptivity,  the ability to suppress noise and amplify signal in images, as a potential explanation for the superiority of convolutional neural networks over convolutional kernels.  As supporting evidence, \cite{karp2021local} demonstrated that convolutional networks generalized far better than convolutional kernels on image classification tasks in which images were embedded in a noisy background.  We now demonstrate that by incorporating feature learning through patch-AGOPs, Deep ConvRFM exhibits local signal adaptivity on the tasks considered in~\cite{karp2021local} and thus, similar to CNNs, yield significantly improved performance over convolutional kernels.  In particular, we begin by comparing performance of CNTK, Conv RFM, Deep ConvRFM, and corresponding CNNs on the following two image classification tasks from~\cite{karp2021local}: (1) images of black and white horizontal bars placed in a random position on larger images of Gaussian noise ; (2) MNIST images placed in a random position on larger images of Gausssian noise.  The work~\cite{karp2021local} demonstrated that CNNs, unlike CNTK, could learn to threshold the background noise and amplify the signal in these tasks thus far outperforming CNTKs when the amount of background noise was large.  In Fig.~\ref{fig: Toy Tasks}, we demonstrate that for these tasks CNNs, ConvRFMs, and Deep ConvRFMs all extract local signals and dim background noise through the AGOP, and thus far outperform CNTKs. Moreover, we observe that Deep ConvRFMs can provide up to a 5\% improvement in performance over ConvRFM on the synthetic MNIST task, indicating a benefit to deep feature learning.

\begin{figure}[!t]
    \centering
    \includegraphics[width=1\textwidth]{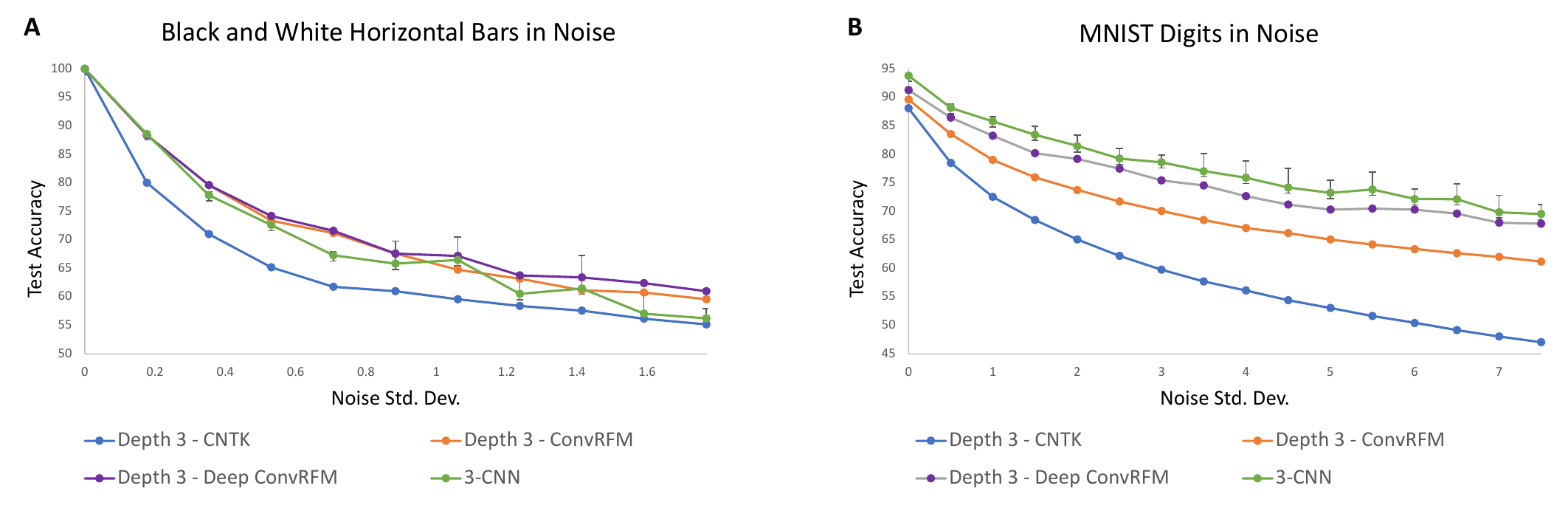}
    \caption{Test accuracy of CNTK, ConvRFM, Deep ConvRFM, and the corresponding CNN on local signal adaptivity tasks from~\cite{karp2021local} as a function of noise level. \textbf{A.} Identifying black and white bars in noisy images. \textbf{B.} MNIST digits placed randomly in noisy background image.}
    \label{fig: Toy Tasks}
\end{figure}

\paragraph{Benefit of deep feature learning on real-world image classification tasks.}  Lastly, we analyze performance of CNTK, ConvRFM, Deep ConvRFM, and the corresponding three convolutional layer CNN on standard image classification datasets available for download from PyTorch.  Consistent with our observations for synthetic tasks from~\cite{karp2021local}, we observe in Fig.~\ref{fig: CRFM Improvement}A that ConvRFM and Deep ConvRFM provide an improvement over CNTK across almost all tasks.  Moreover, we observe that ConvRFM and Deep ConvRFM outperform CNTKs consistently when the corresponding CNN outperforms the CNTK.  In Fig.~\ref{fig: CRFM Improvement}B, we analyze the impact of deep feature learning by increasing the number of feature learning layers in Deep ConvRFM, i.e., the number of layers for which we utilize the AGOP to learn features.  We observe that adding more layers of feature learning leads to consistent performance boost in the local signal adaptivity tasks from~\cite{karp2021local} and on select datasets such as SVHN and EMNIST~\cite{EMNIST}.

\begin{figure}[!t]
    \centering
    \includegraphics[width=1\textwidth]{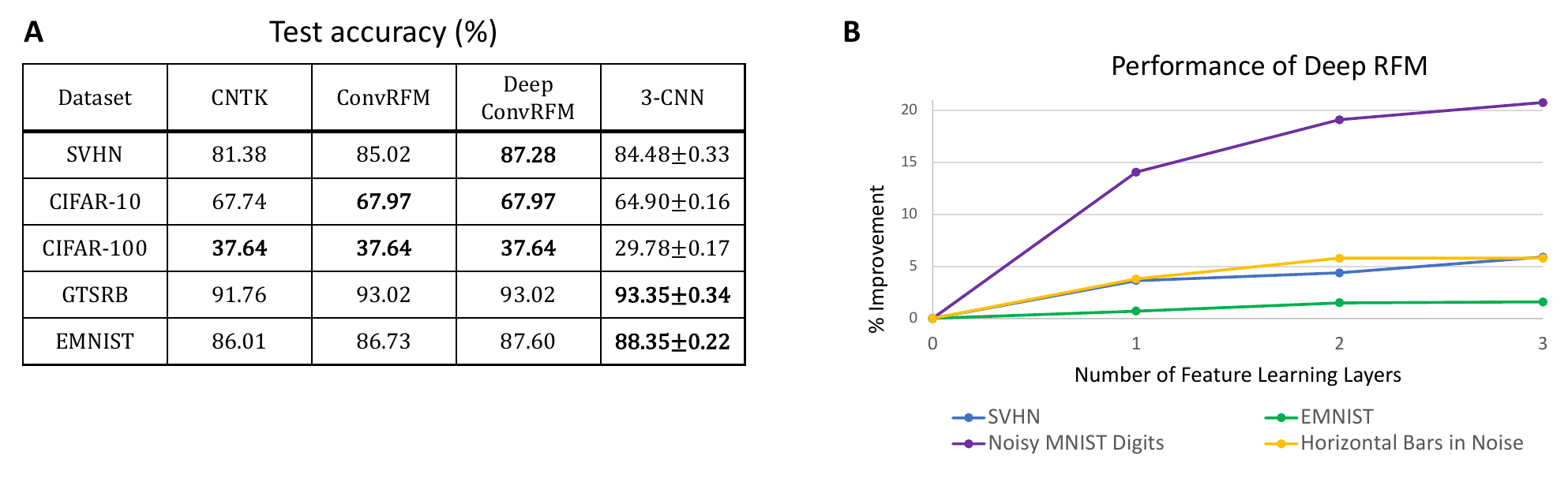}
    \caption{\textbf{A.} Performance comparison of Deep ConvRFM with the corresponding CNTK and CNN on benchmark image classification datasets from PyTorch. \textbf{B.} Effect of number of feature learning layers on Deep ConvRFM performance.}
    \label{fig: CRFM Improvement}
\end{figure}

\section{Discussion}

In this work, we identified a mathematical mechanism of feature learning in deep convolutional networks, which we posited as the Convolutional Neural Feature Ansatz (CNFA).  Namely, the ansatz stated that features selected by convolutional networks, given by empirical covariance matrices of filters at any given layer, can be recovered by computing the average gradient outer product (AGOP) of the trained network with respect to image patches.  We presented  empirical and theoretical evidence for the ansatz.  Notably, we showed that convolutional filter covariances of neural networks pre-trained on ImageNet (AlexNet, VGG, ResNet) are highly correlated with AGOP with respect to patches (in many cases, Pearson correlation $> .9$).  Since the AGOP with respect to patches can be computed on any function operating on image patches, we could use the AGOP to enable feature learning in any machine learning model operating on image patches.  Thus, building on the RFM algorithm for fully connected networks from~\cite{radhakrishnan2022feature}, we integrated the AGOP to enable deep feature learning in convolutional kernel machines, which could not apriori learn features, and referred to the resulting algorithms as ConvRFM and Deep ConvRFM.  We demonstrated that ConvRFM and Deep ConvRFM recover features similar to those of deep convolutional neural networks, including evidence that features learned by these models can serve as universal edge detectors, akin to features learned in convolutional networks.  Moreover, we demonstrated that ConvRFM and Deep ConvRFM overcome prior limitations of convolutional kernels, including the Convolutional Neural Tangent Kernel (CNTK), such the inability to adapt to localized signals in images~\cite{karp2021local}.  Lastly, we showed a benefit to deep feature learning by demonstrating improvement in performance of Deep ConvRFM over ConvRFM and the CNTK on standard image classification benchmarks.  We now conclude with a discussion of implications of our results and future directions.

\paragraph{Identifying mechanisms driving success of deep learning.}  Understanding the mechanisms driving success of neural networks is an important problem for developing effective, interpretable and safe machine learning models.  The complexities of training deep neural networks such as custom training procedures and layer structures (batch normalization, dropout, residual connections, etc.) can make it difficult to pinpoint overarching principles leading to effectiveness of these models.  The fact that correlation between convolutional neural feature matrices (CNFMs) and AGOPs is high for convolutional networks pre-trained on ImageNet with all of these inherent complexities baked in, provides strong evidence that the connection between AGOP and CNFMs is key to identifying the core principles making these networks successful.  

\paragraph{Emergence of universal edge detectors with average gradient outer product.}  Detecting edges in images is a well-studied task in computer vision and classical approaches involved applying fixed convolutional filters to detect edges in images~\cite{DigitalImageProcessingGonzales, ConvEdgeSurvey, ConvRadioEdgeSurvey}.  For example, AlexNet automatically learned filters in its first convolutional layer that were remarkably similar to Gabor filters~\cite{AlexNetGabor}.  Similarly, there was evidence that other convolutional networks pre-trained on ImageNet learned features akin to edge detection in the first layer~\cite{ZeilerVisualizing}.  Yet, it had been unclear how such filters automatically emerge through training.  We demonstrated that the AGOP with respect to patches of a large class of convolutional models (convolutional neural networks and convolutional kernels) trained on various standard image classification tasks consistently recovered edge detectors (see Fig.~\ref{fig: Fig 2}, Fig.~\ref{fig: Conv-RFM feature learning}A, B).  We further showed the universality of these edge detector features by demonstrating that features learned by ConvRFM on SVHN automatically identified edges in ImageNet images. This strongly suggests that edge detectors emerge from the underlying nature of the task rather than specific properties of architectures. 
Our findings indicate that understanding connections between AGOP and classical edge detection approaches is a promising direction for understanding emergence of features in the first layer of convolutional neural networks and for  identifying simple algorithms to capture deeper convolutional features.   

\paragraph{Reducing computational complexity of convolutional kernels.}  In this work, we provided an approach for enabling feature learning in convolutional kernels by iteratively training convolutional kernel machines and computing AGOP of the trained predictor.  Given that convolutional kernels are able to achieve impressive accuracy on standard datasets without any feature learning~\cite{BiettiDeepConvRep, BiettiKernel, EnchancedCNTK, MyrtleKernels, CNTKMillonsExamples}, these methods have the potential to provide state-of-the-art results upon incorporating feature learning.  Yet, in contrast to the case of classical kernel machines such as those used in~\cite{radhakrishnan2022feature}, evaluating the kernel for an effective CNTK (such as those with Global Average Pooling~\cite{CNTKArora}) can be a far more computationally intensive process than simply training a convolutional neural network.  For example, according to Neural Tangents~\cite{NeuralTangentsGoogle}, the CNTK of a Myrtle kernel~\cite{MyrtleKernels} can take anywhere from $300$ to $500$ GPU hours for CIFAR10.  Given that Deep ConvRFM involves constructing a kernel matrix and computing AGOP to capture features at each layer, reducing the evaluation time of convolutional kernels through strategies such as random feature approximations is key to making these approaches scalable.

\section*{Acknowledgements} 
A.R. is supported by the Eric and Wendy Schmidt Center at the Broad Institute.  We acknowledge support from the National Science Foundation (NSF) and the Simons Foundation for the Collaboration on the Theoretical Foundations of Deep Learning\footnote{\url{https://deepfoundations.ai/}} through awards DMS-2031883 and \#814639 as well as the  TILOS institute (NSF CCF-2112665).  This work used the programs (1) XSEDE (Extreme science and engineering discovery environment)  which is supported by NSF grant numbers ACI-1548562, and (2) ACCESS (Advanced cyberinfrastructure coordination ecosystem: services \& support) which is supported by NSF grants numbers \#2138259, \#2138286, \#2138307, \#2137603, and \#2138296. Specifically, we used the resources from SDSC Expanse GPU compute nodes, and NCSA Delta system, via allocations TG-CIS220009.
  
\section*{Code Availability}
All code is available at \url{https://github.com/aradha/convrfm}. 

\bibliographystyle{abbrv}
\bibliography{aux/references}

\appendix 
\section{Theoretical Evidence for Deep Convolutional Feature Ansatz}
\label{appendix: Theoretical Evidence}
\begin{proof}[Proof of Theorem~\ref{thm:one_step}]
Gradient descent proceeds as follows: 
$$B^{(1)} = B^{(0)} + \eta \sum_{p=1}^{n} \sum_{\ell=1}^{m} \frac{\partial g(B^{(0)}v_1^{(p)}, \ldots, B^{(0)} v_m^{(p)})}{\partial z_{\ell}} (y_p - f(v_1^{(p)}, \ldots , v_m^{(p)})) {v_{\ell}^{(p)}}^T~.$$
Since $B^{(0)} = 0$, $g(\mathbf{0}) = 0$ and $\frac{\partial g(\mathbf{0})}{\partial z_{\ell}} = G$ for fixed nonzero $G \in \mathbb{R}^{k}$, the above expression reduces to: 
$$ B^{(1)} = \eta \sum_{p=1}^{n} \sum_{\ell = 1}^{m} G y_p {v_{\ell}^{(p)}}^T~.$$
Thus, we conclude that
\begin{align*}
{B^{(1)}}^T B^{(1)} &= \eta^2 \sum_{p, p'=1}^{n} \sum_{\ell, \ell' = 1}^{m} y_p y_{p'} {v_{\ell}^{(p)}} G^T G {v_{\ell'}^{(p')}}^T \\
&= \left(\eta^2 m^2 G^T G\right) \left( \sum_{p, p'=1}^{n} \sum_{\ell, \ell' = 1}^{m} y_p y_{p'} {v_{\ell}^{(p)}}  {v_{\ell'}^{(p')}}^T \right)\\
&\propto \sum_{p, p'=1}^{n} \sum_{\ell, \ell' = 1}^{m} y_p y_{p'} {v_{\ell}^{(p)}}  {v_{\ell'}^{(p')}}^T~.
\end{align*}
Now, we finish the proof by showing the right hand side of Eq.~\eqref{eq: general model one step} is proportional to the same quantity above.  First, we have that
$$\frac{\partial f^{(1)}(u_1, \ldots, u_m)}{\partial v_r} = {B^{(1)}}^T \frac{\partial g(B^{(1)}u_1, \ldots, B^{(1)}u_m)}{\partial z_r}~.$$
Thus, letting $u = (u_1, \ldots u_m)$ and $B^{(1)}u = (B^{(1)}u_1, \ldots, B^{(1)}u_m)$, we have that
\begin{align*}
     \sum_{r=1}^{m} \frac{\partial f^{(1)}(u)}{\partial v_r} \frac{\partial f^{(1)}(u)}{\partial v_r}^T &= \sum_{r=1}^{m} {B^{(1)}}^T \frac{\partial g(B^{(1)}u)}{\partial z_r} \frac{\partial g(B^{(1)}u)}{\partial z_r}^T {B^{(1)}}^T \\
     &= \eta^2 \sum_{r=1}^{m} \left( \sum_{p=1}^{n} \sum_{\ell = 1}^{m}  y_p {v_{\ell}^{(p)}} G^T \frac{\partial g(B^{(1)}u)}{\partial z_r} \right) \left( \sum_{p'=1}^{n} \sum_{\ell' = 1}^{m} \frac{\partial g(B^{(1)}u)}{\partial z_r}^T G y_p'{v_{\ell'}^{(p')}}^T  \right) \\
     &= \eta^2 \sum_{p, p'=1}^{n} \sum_{\ell, \ell' = 1}^{m}  y_p y_{p'} {v_{\ell}^{(p)}} {v_{\ell'}^{(p')}}^T  \left( \sum_{r=1}^{m} G^T \frac{\partial g(B^{(1)}u)}{\partial z_r} \frac{\partial g(B^{(1)}u)}{\partial z_r}^T G \right) \\
     &\propto \sum_{p, p'=1}^{n} \sum_{\ell, \ell' = 1}^{m} y_p y_{p'} {v_{\ell}^{(p)}}  {v_{\ell'}^{(p')}}^T~.
\end{align*}
\end{proof}

\section{Experimental Details}
\label{appendix: experimental details}

\paragraph{Neural network comparisons.} For all neural network experiments, we reported the best test accuracy across all epochs. For the Adam experiments, we trained CNNs without bias, with learning rate $10^{-4}$, width $64$, without padding and with minibatch size $128$. For EMNIST, CIFAR-10/100, SVHN, GTSRB the networks were trained for 500 epochs. For the toy datasets, the networks were trained for 25 epochs. For SGD experiments, the setup was identical except the learning rate was $10^{-1}$, and EMNIST, CIFAR-10/100, SVHN, GTSRB were trained for $2000$ epochs, and toy datasets for $100$ epochs.

\paragraph{Visualizations.} For the visualizations in Figs.~\ref{fig: Conv-RFM feature learning}, \ref{appendix fig: CIFAR10 GTSRB ConvRFM Features}, the toy tasks were visualized with $\|M^{1/2}x\|$, while CIFAR and SVHN were visualized with $\|Mx\|$. Further, for the neural networks in CIFAR and SVHN, the initial weight matrices were subtracted before using the CNFMs. For Figure~\ref{fig: Deep FL}, the visualizations were also done with the full $M$ matrix and subtracting the initial weights. The weight matrices were extracted after $250$ epochs. For visualization, the $M$ matrix that gave the best performance of the $5$ iterations of ConvRFM was selected, and the CNN neural feature matrices were extracted at the end of training.

\paragraph{Deep ConvRFM.} For Deep-ConvRFM experiments, we greedily selected the best performing $M$ matrix among $5$ rounds of ConvRFM for each depth. Further, we tuned regularization among $10^{-8}$, $10^{-5}$, $10^{-3}$ and divided the train and test kernel matrices by the maximum value of the train kernel matrix. In all of the experiments, we used the same architecture as the CNN. In particular, we sampled $64$ filters in each layer and removed bias. Further, to ensure that the ConvRFM did not have access to a kernel of additional depth, we reduced the depth of the kernel by $1$ with each layer of Deep ConvRFM. Instead of sampling from $M$, we generated filters sampled from standard normal distribution and applied $M^{1/2}$ to each filter. 

\paragraph{CNFA verification.} For the CNFA verification experiments in Figure~\ref{appendix fig: Conv NFA}, we used the uncentered correlation (commonly-known as cosine similarity) to measure similarity between the AGOP and CNFM. The correlation was averaged over ten datasets: Fine-Grained Visual Classification of Aircraft \cite{AircraftDataset}, PatchCamelyon \cite{PCAM}, CIFAR-10, STL-10~\cite{STL10}, GTSRB, SVHN, Caltech101, DTD \cite{DTD}, QMNIST \cite{QMNIST}, EMNIST. We used zero padding in all layers, a learning rate of $2 \times 10^{-4}$, $500$ epochs of training with the Adam optimizer, and minibatch size of $128$. For datasets with multiple color channels, the RGB color channels were scaled and centered to have means [125.3/255, 123/125, 113.9/225] and standard deviations [63.0/225, 62.1/225, 66.7/225], respectively. Datasets with images larger than 32x32 resolution were resized using PyTorch's resize transform to 32x32 resolution. All layers were initialized from a standard normal distribution. The first layer was initialized with standard deviation $5\times 10^{-3}$, while the remaining layers were sampled with standard deviation $10^{-2}$. 

\paragraph{Convolutional kernel implementation.}

To implement convolutional kernels, we used the Neural Tangents library \cite{NeuralTangentsGoogle}. Mahalanobis kernels were not implemented directly in this library at the time of publication. To implement ConvRFM, we performed the following procedure: (1) unfold each image into all patches (without any padding), (2) applied $M^{1/2}$ to each patch independently, (3) reshaped the images to be 2-dimensional, and (4) set the stride of the first-convolutional layer in the kernel to the patch size, then ran kernel regression. 
\clearpage

\begin{figure}[h]
    \centering
    \includegraphics[width=.8\textwidth]{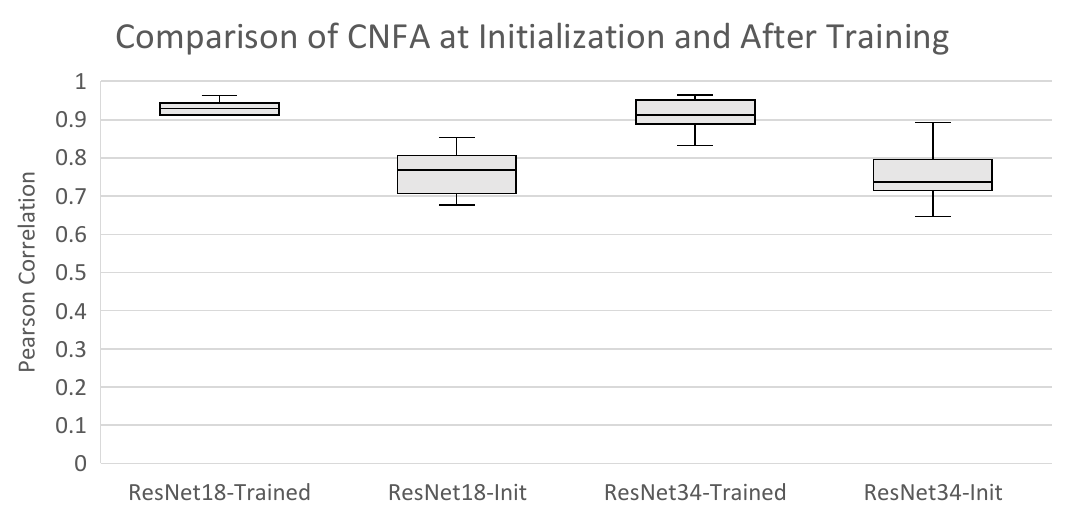}
    \caption{Comparison of correlation between CNFMs and AGOP for randomly initialized ResNets and pre-trained ResNets on ImageNet.}
    \label{appendix fig: CNFA Init vs Final}
\end{figure}

\clearpage

\begin{figure}[h]
    \centering
    \includegraphics[width=\textwidth]{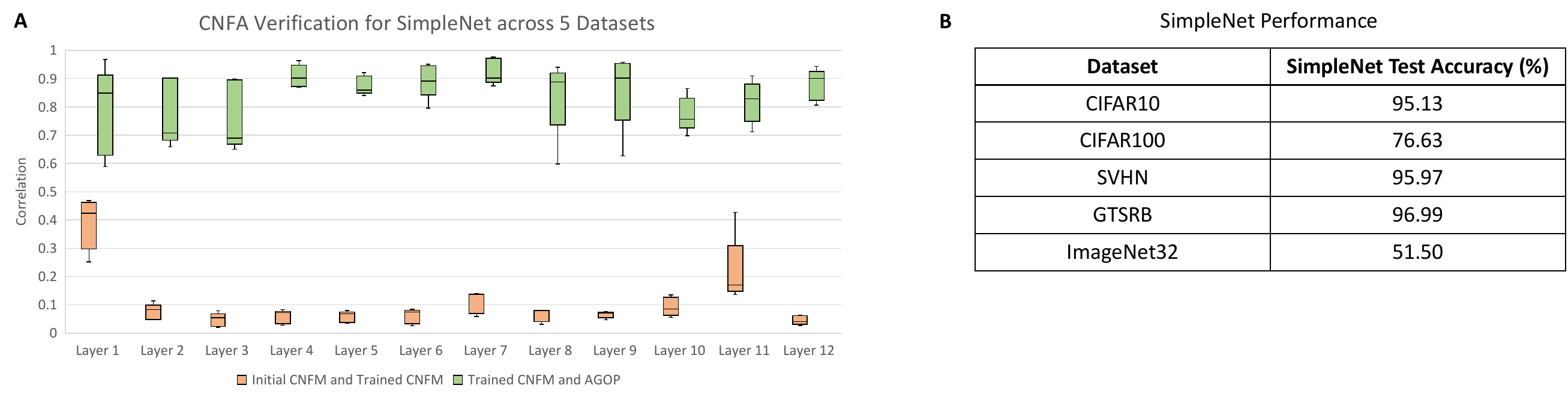}
    \caption{\textbf{A.} Correlation between initial CNFM and trained CNFM (red) and trained CNFM and AGOP (green) for each convolutional layer of SimpleNets trained on 5 datasets.  \textbf{B.} Performance of SimpleNet on the 5 corresponding datasets.}
    \label{appendix fig: Simplenet CNFA}
\end{figure}

\clearpage 

\begin{figure}[h]
    \centering
    \includegraphics[width=1\textwidth]{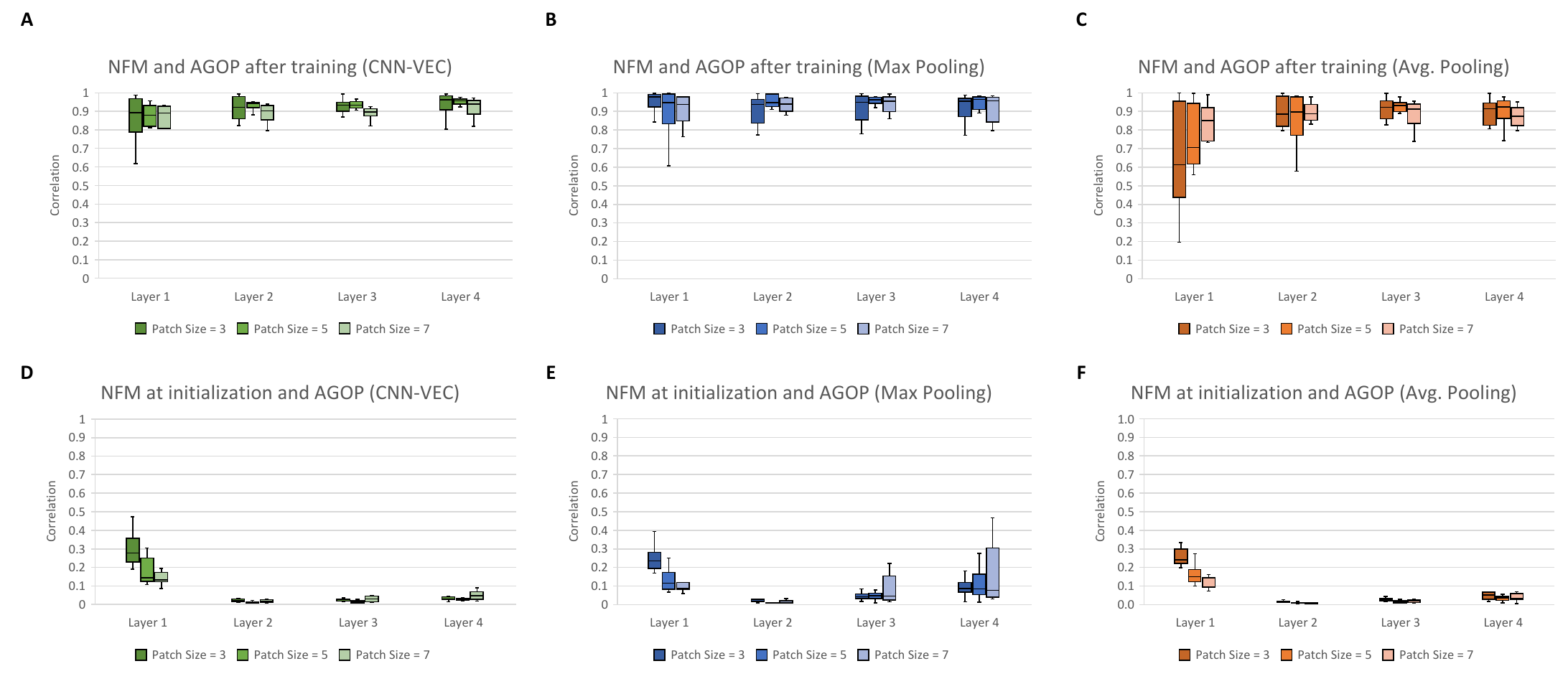}
    \caption{Correlation (cosine similarity) of NFM and AGOP across patch sizes and pooling types on ten datasets. \textbf{(A)} CNN-VEC, trained NFM and AGOP, \textbf{(B)} Max pooling, trained NFM and AGOP, \textbf{(C)} Average pooling, trained NFM and AGOP, \textbf{(D)} CNN-VEC, initial NFM and AGOP, \textbf{(E)} Max pooling, initial NFM and AGOP, \textbf{(F)} Average pooling, initial NFM and AGOP.}
    \label{appendix fig: Conv NFA}
\end{figure}

\clearpage

\begin{figure}[h]
    \centering
    \includegraphics[width=1\textwidth]{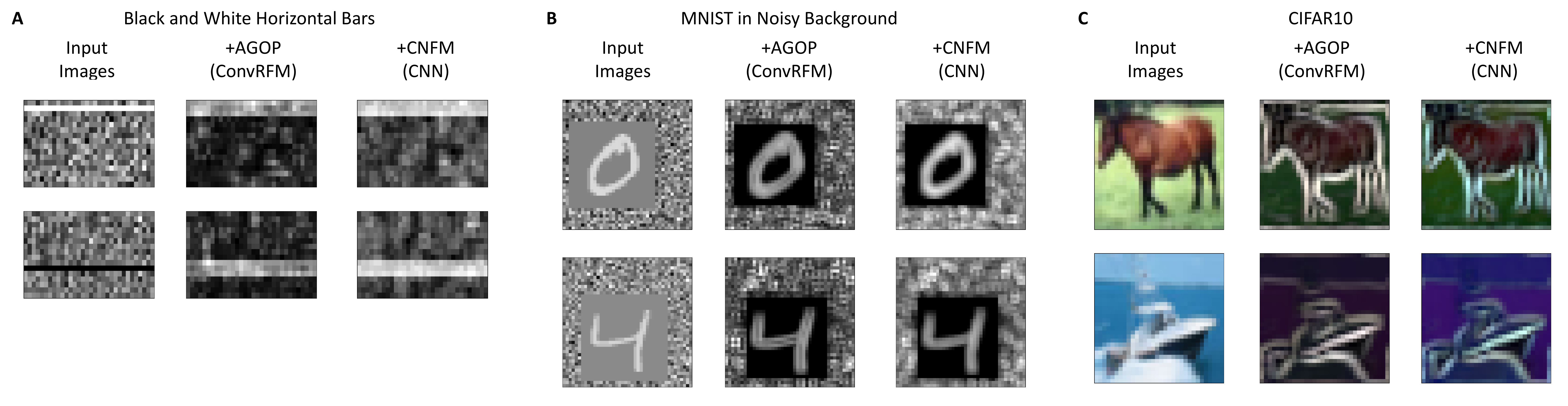}
    \caption{Comparison of feature extraction performed by patch-AGOPs and CNFMs from ConvRFM and the corresponding CNN. \textbf{A.} Visualizations for models trained on horizontal bars in noisy backgrounds (the toy task from~\cite{karp2021local}).  \textbf{B.} Visualizations for models trained on MNIST images in noisy backgrounds, which was also the task considered in~\cite{karp2021local}.  \textbf{C.} Visualizations for models trained on CIFAR10.}
    \label{appendix fig: Conv RFM vs. CNN Features}
\end{figure}

\clearpage

\begin{figure}[h]
    \centering
    \includegraphics[width=1\textwidth]{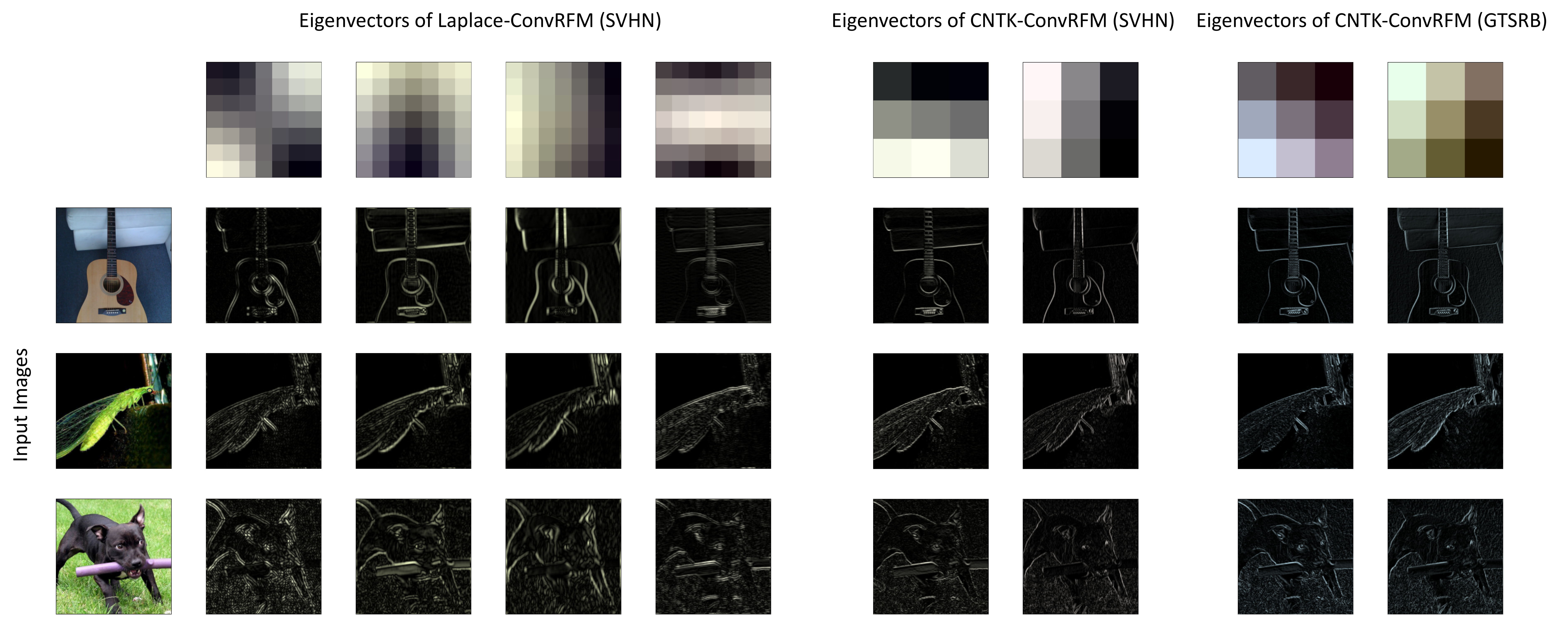}
    \caption{Top eigenvectors of patch-AGOP from Laplace-ConvRFM and CNTK-ConvRFM trained on standard image classification datasets (SVHN and GTSRB) act as universal edge detectors of different orientations.  We visualize images after applying each top eigenvector to image patches of ImageNet images.}
    \label{appendix fig: ConvRFM Edge Detection}
\end{figure}

\clearpage

\begin{figure}[h]
    \centering
    \includegraphics[width=1\textwidth]{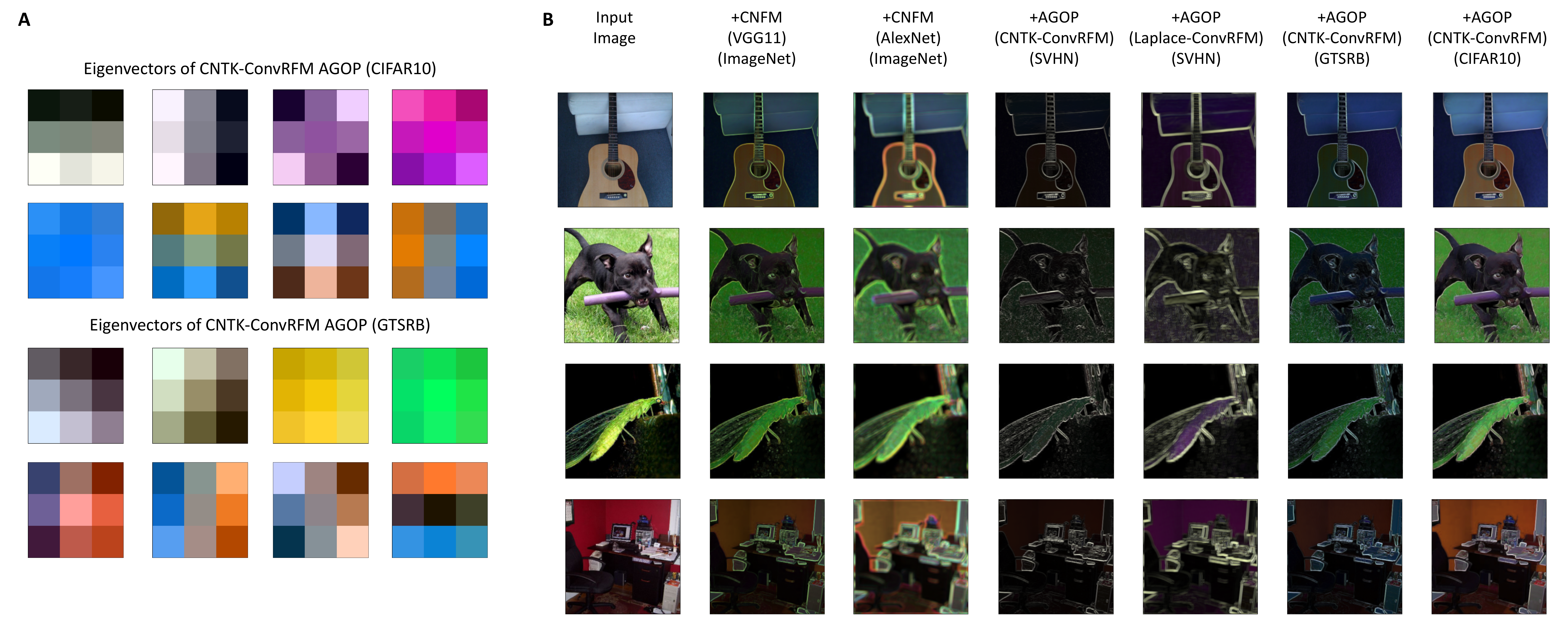}
    \caption{\textbf{A.} Visualization of top $8$ eigenvectors of patch-AGOP from CNTK-ConvRFM trained on CIFAR10 and GTSRB.  \textbf{B.} Comparison of feature extraction on ImageNet data performed by CNFMs from VGG11, AlexNet pre-trained on ImageNet and patch-AGOP operators of CNTK-ConvRFM trained on SVHN, GTSRB, CIFAR10 and Laplace-ConvRFM trained on SVHN.}
    \label{appendix fig: CIFAR10 GTSRB ConvRFM Features}
\end{figure}

\end{document}